\documentclass[runningheads]{llncs}

\usepackage[T1]{fontenc}
\usepackage{amsmath,amssymb}
\usepackage{graphicx}
\usepackage{booktabs}
\usepackage{multirow}
\usepackage[hidelinks]{hyperref}

\graphicspath{{figures/}}
\renewcommand{\Re}{\operatorname{Re}}
\renewcommand{\Im}{\operatorname{Im}}
\newcommand{\ppl}{\mathrm{PPL}}

\begin{document}

\title{An Input-Dependent Readout Phase\\for Resonant Sequence Models}
\titlerunning{An Input-Dependent Readout Phase for Resonant Sequence Models}
\author{Shivesh Rane\orcidID{0009-0002-1265-1292}}
\authorrunning{S. Rane}
\institute{}

\maketitle

\begin{abstract}
Sequence models with a fixed-size state avoid the quadratic cost of attention,
but face a design choice. An input-dependent state transition gives
content-dependent memory at the cost of FFT-based training, while a fixed
transition keeps the $O(T\log T)$ convolutional path but reads memory the same
way for every input. ResonatorLM, a recent model of the second kind, reads a
bank of damped complex oscillators through a fixed learned phase per head. We
make that readout phase a function of the current token, which adds input
dependence to the read only, keeps the convolution exact, and costs 0.2\% more
parameters. An analysis of the untrained bank shows that phase does encode
elapsed time, but an ordinary token falls below the background within 2-8
steps and no choice of damping extends linear reconstruction beyond about 16
tokens. In 6M-parameter character-level models on WikiText-2, the
input-dependent phase matches the original at the training length and removes
its failure beyond it: trained on 256 tokens and tested on 1024, perplexity
falls from 12.81 to 3.76. We trace the failure to long-memory heads that
accumulate more history than training exposed them to. The learned phase tracks
the current character rather than distance and does not improve retrieval.
\keywords{Sequence modelling \and State-space models \and Length
generalisation \and Language modelling}
\end{abstract}

\section{Introduction}

Self-attention~\cite{vaswani2017attention} costs time quadratic in the context
length, which has made sequence models with a fixed-size recurrent state an
active alternative~\cite{gu2020hippo,gu2022s4,rusch2025linoss,gu2023mamba}.
These models share one decisive design choice: whether the state transition
depends on the input. Selective models such as Mamba and its
successors~\cite{gu2023mamba,mamba3,kimi2025linear,gateddeltanet2} make it
input-dependent and so decide per token what to store and what to forget. The
recurrence is then no longer a fixed convolution, and the $O(T\log T)$ FFT
training path is lost. Structured state-space models such as
S4~\cite{gu2022s4} keep the transition fixed. Each layer is a convolution with a
fixed kernel and trains with FFTs, but its memory is written and read
identically for every input.

We ask how much input dependence can be added to the fixed branch without
leaving it, by making only the \emph{read} of the state input-dependent. The
question has a concrete form in resonant sequence models.
ResonatorLM~\cite{chaudhury2026resonatorlm} replaces attention with banks of
damped complex oscillators. Every past token enters the state at an angle
proportional to its age, and the state is read by projecting onto one fixed,
learned direction per head. That direction selects which ages are read, and it
never changes after training. We let it depend on the current token.

\paragraph{Contributions.}
\begin{enumerate}
  \item An input-dependent readout phase for resonant models that keeps the
        kernel input-independent and the FFT path exact (Sect.~\ref{sec:method}),
        with a capacity identity showing that a fixed phase reads half of the
        stored state (Proposition~\ref{prop:capacity}).
  \item A measurement of what the published oscillator bank can hold
        (Sect.~\ref{sec:bank}): an ordinary token is undetectable after a few
        steps, and linear reconstruction stops at about 16 tokens whether the
        bank is damped, lengthened or undamped.
  \item Trained results (Sect.~\ref{sec:trained}): the input-dependent phase
        ties the original at the training length and removes its failure beyond
        it, a failure we trace causally to long-memory heads
        (Sect.~\ref{sec:mechanism}). The learned phase is a content-dependent
        read, not a pointer back in time (Sect.~\ref{sec:whatphase}).
\end{enumerate}

\section{Background: ResonatorLM}
\label{sec:background}

A ResonatorLM mixer has $H$ heads. Head $h$ has a damping $\alpha_h>0$, a
frequency $\omega_h\in(0,\pi)$ and a phase $\varphi_h$, all learned. The input
$x_t\in\mathbb{R}^{d}$ is projected to a drive $u_{t,h}\in\mathbb{R}^{d_h}$ and
a gate. Each head keeps a complex state $s_{t,h}\in\mathbb{C}^{d_h}$:
\begin{equation}
  s_{t,h} = \lambda_h\, s_{t-1,h} + u_{t,h},
  \qquad \lambda_h = e^{-\alpha_h + i\omega_h},
  \qquad\text{so}\qquad
  s_{t,h} = \sum_{j\le t} e^{-\alpha_h (t-j)}\, e^{i\omega_h (t-j)}\, u_{j,h}.
  \label{eq:recurrence}
\end{equation}
A token of age $\tau=t-j$ is stored with magnitude $e^{-\alpha_h\tau}$ and angle
$\omega_h\tau$. The head outputs the real projection
\begin{equation}
  y_{t,h} = \Re\!\left(e^{-i\varphi_h} s_{t,h}\right)
          = \sum_{j\le t} k_h[t-j]\,u_{j,h},
  \qquad k_h[\tau] = e^{-\alpha_h\tau}\cos(\omega_h\tau - \varphi_h),
  \label{eq:readout}
\end{equation}
so a layer is a causal convolution with a fixed kernel, computed by FFT in
$O(T\log T)$ for training and by Eq.~\eqref{eq:recurrence} for decoding. The
output is gated, mixed across heads, and added to a short causal convolution.
Since $\varphi_h$ is a constant, the readout keeps the component of the state
along one fixed direction and discards the orthogonal one. The gate acts after
this projection and cannot recover what it discards.\footnote{The original paper
writes the kernel with $\cos(\omega t+\varphi)$ but a readout that unrolls to
$\cos(\omega t-\varphi)$. We use the latter throughout.}

\section{Method}
\label{sec:method}

\begin{definition}[Input-dependent readout phase]
For head $h$ and position $t$, let
\begin{equation}
  \varphi_{t,h} = \varphi_h + \pi\tanh\!\left(w_h^{\top}\bar x_t + b_h\right),
  \qquad
  y_{t,h} = \cos\varphi_{t,h}\,\Re s_{t,h} + \sin\varphi_{t,h}\,\Im s_{t,h},
  \label{eq:dynphase}
\end{equation}
where $\bar x_t$ is the normalised mixer input and $(w_h,b_h)$ are learned.
\end{definition}

The transition $\lambda_h$ is unchanged, so both state components are still
causal convolutions with fixed kernels,
\begin{equation}
  \Re s_{t,h} = \sum_{\tau\ge0} e^{-\alpha_h\tau}\cos(\omega_h\tau)\,u_{t-\tau,h},
  \qquad
  \Im s_{t,h} = \sum_{\tau\ge0} e^{-\alpha_h\tau}\sin(\omega_h\tau)\,u_{t-\tau,h},
  \label{eq:components}
\end{equation}
which share one forward FFT of $u$. Equation~\eqref{eq:dynphase} is then applied
pointwise. The FFT and recurrent paths agree to $4.4\times10^{-16}$ in float64.
The projection adds $H(d+1)$ parameters per layer, 12{,}336 in total
($+0.2\%$), and is zero-initialised, so training starts exactly at the
fixed-phase model.

The following identity quantifies what a fixed phase gives up.

\begin{proposition}[Capacity]
\label{prop:capacity}
Let the drive of one channel be white noise with unit variance, and let
$z_t\in\mathbb{R}^m$ be linear features of the state with nonsingular covariance
$\Sigma$. Write $R^2(d)$ for the fraction of the variance of $u_{t-d}$ that is
linearly recoverable from $z_t$. Then $\sum_{d\ge0} R^2(d) = m$. For the full
state of $H$ heads ($z=[\Re s,\Im s]$) this is $2H$; for any fixed real readout
it is $H$.
\end{proposition}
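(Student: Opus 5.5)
Since $z_t$ is a linear function of the state, and the state is a linear function of the past drive, we can write $z_t=\sum_{d\ge0} c_d\,u_{t-d}$ with $c_d\in\mathbb{R}^m$. For the full state of one channel in head $h$, $c_d = e^{-\alpha_h d}(\cos\omega_h d,\ \sin\omega_h d)^{\top}$ by Eq.~\eqref{eq:components}. For a real readout with fixed phase, $c_d=k_h[d]$ as in Eq.~\eqref{eq:readout}. Because $u$ is white with unit variance, the covariance is $\Sigma=\mathbb{E}[z_tz_t^{\top}]=\sum_d c_dc_d^{\top}$, which converges because $\alpha_h>0$. The cross-covariance is $\mathbb{E}[z_t u_{t-d}]=c_d$.

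The optimal linear predictor of $u_{t-d}$ from $z_t$ is therefore $c_d^{\top}\Sigma^{-1}z_t$. Its explained variance, as a fraction of $\operatorname{Var}u_{t-d}=1$, is
\begin{equation*}
R^2(d)=c_d^{\top}\Sigma^{-1}c_d=\operatorname{tr}\bigl(\Sigma^{-1}c_dc_d^{\top}\bigr).
\end{equation*}
All terms are nonnegative, so we may sum and exchange the trace with the sum. This gives
\begin{equation*}
\sum_{d\ge0}R^2(d)=\operatorname{tr}\Bigl(\Sigma^{-1}\sum_{d}c_dc_d^{\top}\Bigr)=\operatorname{tr}(I_m)=m.
\end{equation*}
This is the standard memory-capacity identity for linear reservoirs.

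For the two special cases, we only need the dimension $m$ of each feature vector and the nonsingularity of its $\Sigma$. We read the claim per channel, with each head contributing its own features and all heads driven by the same white-noise channel $u$.

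\begin{itemize}
\item For $z=[\Re s,\Im s]$ over $H$ heads, $m=2H$.
\item A fixed real readout gives one feature per head, so $m=H$.
\end{itemize}

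Nonsingularity is where the work lies. The identity needs $\Sigma$ to be invertible. Without that, one replaces $\Sigma^{-1}$ by the pseudoinverse and obtains $\operatorname{rank}\Sigma$ instead of $m$.

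For a single head, $\Sigma\succ0$ holds as soon as $c_0$ and $c_1$ are linearly independent. Here $c_0=(1,0)^{\top}$ and $c_1=e^{-\alpha}(\cos\omega,\sin\omega)^{\top}$, and $\sin\omega\neq0$ because $\omega\in(0,\pi)$.

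Across heads, $\Sigma=\sum_d c_dc_d^{\top}$ is singular exactly when some nonzero $v$ satisfies $v^{\top}c_d=0$ for all $d$. Then $\sum_h$ of combinations of $\lambda_h^d$ and $\bar\lambda_h^d$ would vanish identically. This is impossible when the eigenvalues $\lambda_h,\bar\lambda_h$ are pairwise distinct, by linear independence of distinct exponential sequences (a Vandermonde argument).

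The real-readout case is similar. Each $k_h[d]$ is a nonzero combination of $\lambda_h^d$ and $\bar\lambda_h^d$, so the $H$ sequences are independent under the same distinctness assumption.

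I would state distinct $(\alpha_h,\omega_h)$ as the generic condition under which the hypothesis "$\Sigma$ nonsingular" holds. I would also note that the proposition simply assumes it, so the proof reduces to the trace identity above.
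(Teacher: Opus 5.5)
Your proof is correct and is the same argument as the paper's: with $c_d=\mathrm{Cov}(z_t,u_{t-d})$ you get $R^2(d)=c_d^{\top}\Sigma^{-1}c_d$ and $\Sigma=\sum_d c_dc_d^{\top}$, so the sum is $\operatorname{tr}(I_m)=m$. Your Vandermonde argument, that distinct $(\alpha_h,\omega_h)$ make $\Sigma$ nonsingular for both the $2H$- and $H$-dimensional feature sets, goes beyond the paper, which simply assumes nonsingularity; it justifies applying the identity to the two special cases.
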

\begin{proof}
With $g_d=\mathrm{Cov}(z_t,u_{t-d})$, $R^2(d)=g_d^{\top}\Sigma^{-1}g_d$ and
$\Sigma=\sum_d g_d g_d^{\top}$, so
$\sum_d R^2(d)=\mathrm{tr}\big(\Sigma^{-1}\sum_d g_dg_d^{\top}\big)=\mathrm{tr}(I_m)=m$.
\qed
\end{proof}

A fixed phase therefore leaves half of the recoverable information unread
(numerically $8.00$ of $16.00$ for the published bank). Choosing the phase per
lag recovers essentially the full-state value, but the total is spread thinly
over lags: $R^2$ falls below 0.5 by $d=5$ and below 0.1 beyond $d=16$.

\section{What the Oscillator Bank Can Hold}
\label{sec:bank}

We first analyse the published bank without training: $H=8$ heads, half-lives
$\ln 2/\alpha_h$ log-spaced over $[2,2048]$, frequencies log-spaced over
$[\pi/1024,\,0.99\pi]$, $d_h=32$, float64.

\paragraph{Detectability.} The bank is driven by white noise for $T=4160$ steps,
and one marked token of typical norm ($\sqrt{d_h}$) is added $d$ steps back.
Reading at the angle that token occupies, $\varphi=\omega_h d$, we measure
\begin{equation}
  \mathrm{SNR}_h(d) = \frac{\mathbb{E}\,\lvert r_h(s)\rvert}
                           {\mathrm{sd}\,\big(r_h(s_{\mathrm{bg}})\big)},
  \qquad r_h(s)=\big\langle \Re(e^{-i\omega_h d}s_h),\,\hat u\big\rangle ,
  \label{eq:snr}
\end{equation}
where $s_{\mathrm{bg}}$ is the state without the token and $\hat u$ its
direction. With no token, $\mathrm{SNR}=\sqrt{2/\pi}\approx0.80$ exactly.
Figure~\ref{fig:snr} shows that the token exceeds $\mathrm{SNR}=3$ only for
$d\le2$, and heads with half-life $\ge105$ remain near the null even for $d=1$.
Their background grows from 1.4 to 37.3 in standard deviation as the half-life
lengthens: the damping that buys range also accumulates more background.

\begin{figure}[t]
  \centering
  \includegraphics[width=\textwidth]{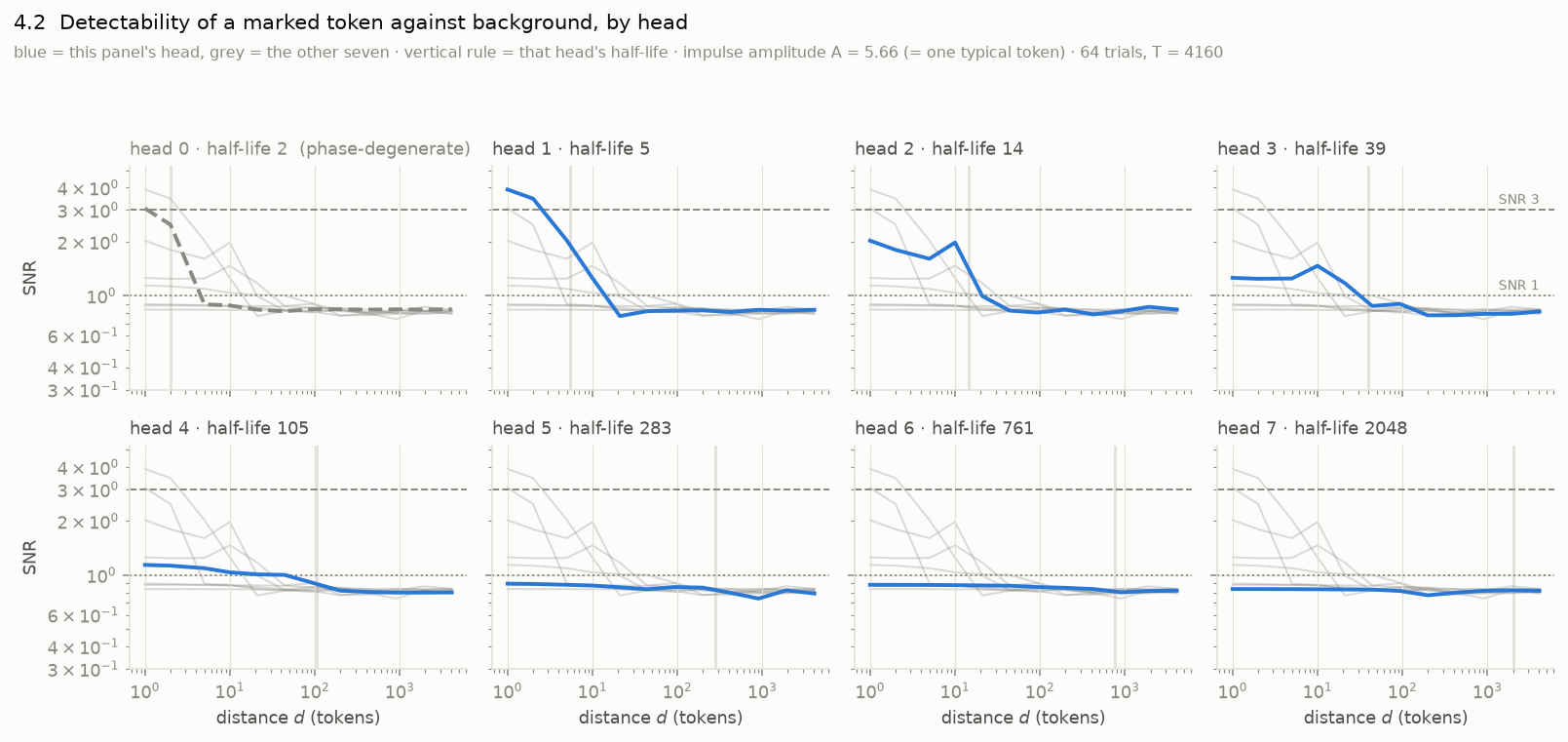}
  \caption{Detectability of one typical token against per-step background,
  per head (Eq.~\ref{eq:snr}). Dashed lines mark $\mathrm{SNR}=1$ and $3$.}
  \label{fig:snr}
\end{figure}

\paragraph{Reconstruction from text.} We drive the bank with WikiText-2
characters~\cite{merity2017wikitext} through a fixed random embedding, and train
a linear decoder on $[\Re s_t,\Im s_t]$ to predict the character at $t-d$. States
are read at $t\in[512,1024)$, data are split by sequence, and the decoder is
early-stopped. Five banks are compared: the published one, half-lives up to
$6{,}931$ (the model's floor on $\alpha$), four heads nearly undamped, all heads
undamped, and the six banks a trained model learned. All damped banks give the
same curve (Fig.~\ref{fig:recon}): above 98\% to $d=4$, 76\% at $d=8$, about
50\% at $d=12$, and chance by $d=32$--$64$. The undamped bank is at chance even at
$d=0$, because no token dominates a sum in which every past token has equal
weight. The limit is crowding, not damping.

\begin{figure}[t]
  \centering
  \includegraphics[width=0.8\textwidth]{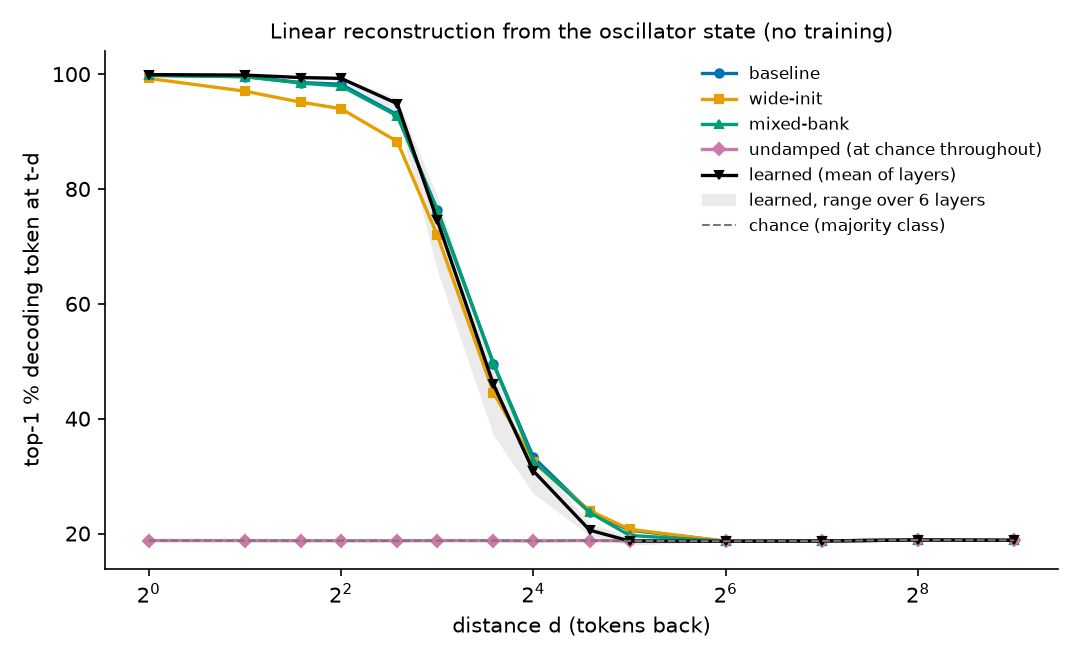}
  \caption{Linear reconstruction of the character $d$ steps back from the
  oscillator state. Damped banks coincide; the undamped bank is unreadable.}
  \label{fig:recon}
\end{figure}

Two further checks support this picture. With no background, the eight heads
jointly locate a token's age exactly to $d\ge32{,}768$, sixteen times the longest
period, so the time information is present. With background, however, an
ordinary token is located only to $d\approx8$. Repeating both measurements with
the bfloat16 roundings used in training changes the SNR by at most $0.09\%$ and
reconstruction by at most 0.65 points. An input-dependent phase can therefore
choose \emph{what} to read, but not recover what the state does not hold.

\section{Trained Models}
\label{sec:trained}

\paragraph{Setup.} We follow the ResonatorLM protocol: WikiText-2 raw at
character level (vocabulary 1013), 6 layers, $d=256$, $H=8$, $d_h=32$, 10{,}000
AdamW steps (learning rate $5\times10^{-4}$, cosine schedule, weight decay 0.1)
with 8{,}192 tokens per step, and bfloat16. The models compared are:
\textbf{A}, the original (fixed $\varphi_h$, initialised at 0; 6 seeds);
\textbf{C}, the proposal (Eq.~\ref{eq:dynphase}; 3 seeds); two fixed-phase
controls (3 seeds each), \textbf{B} with $\varphi_h$ initialised uniformly on
$(-\pi,\pi)$ and \textbf{E} with only the two longest-memory heads initialised
at $\pi/2$; and matched transformers ($d=248$) with rotary
(RoPE)~\cite{su2021roformer} or learned position embeddings (6 seeds each). A and
C are trained at $T\in\{256,512,1024\}$ and tested on the test split at $T$,
$2T$ and $4T$. A run \emph{collapses} if $\ppl(4T)>1.5\,\ppl(T)$.

\paragraph{At the training length.} Table~\ref{tab:quality} shows that C and A
tie at every training length. Both improve on the published ResonatorLM
numbers, and at 256 the RoPE transformer is better than either, contrary to
the original report.

\begin{table}[t]
  \centering
  \caption{Test perplexity and accuracy at the training length (mean $\pm$ sd
  over seeds). Published values from~\cite{chaudhury2026resonatorlm}.}
  \label{tab:quality}
  \setlength{\tabcolsep}{5pt}
  \begin{tabular}{llccc}
    \toprule
    Trained and tested at & Model & Seeds & Perplexity & Accuracy (\%) \\
    \midrule
    \multirow{5}{*}{256}
      & Transformer (RoPE)           & 6 & $3.426\pm0.007$ & $65.40$ \\
      & Transformer (learned PE)     & 6 & $3.444\pm0.016$ & $64.62$ \\
      & A: ResonatorLM               & 6 & $3.506\pm0.011$ & $64.76$ \\
      & C: input-dependent phase     & 3 & $3.495\pm0.004$ & $65.01$ \\
      & ResonatorLM, published       & 6 & $3.764$         & $61.31$ \\
    \midrule
    \multirow{3}{*}{512}
      & A                            & 3 & $3.421$ & $65.40$ \\
      & C                            & 3 & $3.414$ & $65.53$ \\
      & ResonatorLM, published       & 3 & $4.339$ & $57.20$ \\
    \midrule
    \multirow{3}{*}{1024}
      & A                            & 3 & $3.384$ & $65.74$ \\
      & C                            & 3 & $3.390$ & $65.76$ \\
      & ResonatorLM, published       & 3 & $4.502$ & $56.17$ \\
    \bottomrule
  \end{tabular}
\end{table}

\paragraph{Beyond the training length.} Table~\ref{tab:ratio} and
Fig.~\ref{fig:length} give the main result. C is lower than A in all six
settings beyond the training length, with non-overlapping seed ranges in each.
The effect is largest when training is short. Trained at 256 and tested at
1024, A collapses in all 6 seeds (perplexity 10.3--14.7, accuracy 45.0\%) while
C stays at 3.70--3.84 (accuracy 63.5\%). Trained at 512, A degrades by 40\% at
$4T$ and C by 3\%. Trained at 1024, both stay within 3\%. The transformers
degrade as well, most severely with learned positions.

\begin{table}[t]
  \centering
  \caption{Test perplexity when testing beyond the training length $T$ (mean
  over seeds). The tested length is shown in each block. Ratio is
  $\ppl(4T)/\ppl(T)$; ``collapsed'' counts seeds with ratio $>1.5$.}
  \label{tab:ratio}
  \setlength{\tabcolsep}{5pt}
  \begin{tabular}{llccccc}
    \toprule
    Trained at & Model & \multicolumn{3}{c}{Tested at} & Ratio & Collapsed \\
    \midrule
    \multirow{7}{*}{$T=256$}
      & & \emph{256} & \emph{512} & \emph{1024} & & \\
      & Transformer (RoPE)       & 3.43 & 4.62  & 10.39 & 3.03  & -- \\
      & Transformer (learned PE) & 3.44 & 11.33 & 50.15 & 14.56 & -- \\
      & A: ResonatorLM           & 3.51 & 3.57  & 12.81 & 3.65  & 6/6 \\
      & B: fixed, spread init    & 3.49 & 3.52  & 8.85  & 2.54  & 1/3 \\
      & E: long heads at $\pi/2$ & 3.49 & 3.47  & 7.57  & 2.17  & 3/3 \\
      & C: input-dependent phase & 3.50 & 3.46  & \textbf{3.76} & \textbf{1.07} & 0/3 \\
    \midrule
    \multirow{3}{*}{$T=512$}
      & & \emph{512} & \emph{1024} & \emph{2048} & & \\
      & A & 3.42 & 3.56 & 4.78 & 1.40 & 0/3 \\
      & C & 3.41 & 3.41 & \textbf{3.52} & \textbf{1.03} & 0/3 \\
    \midrule
    \multirow{3}{*}{$T=1024$}
      & & \emph{1024} & \emph{2048} & \emph{4096} & & \\
      & A & 3.38 & 3.43 & 3.50 & 1.03 & 0/3 \\
      & C & 3.39 & 3.41 & \textbf{3.47} & \textbf{1.02} & 0/3 \\
    \bottomrule
  \end{tabular}
\end{table}

\begin{figure}[t]
  \centering
  \includegraphics[width=\textwidth]{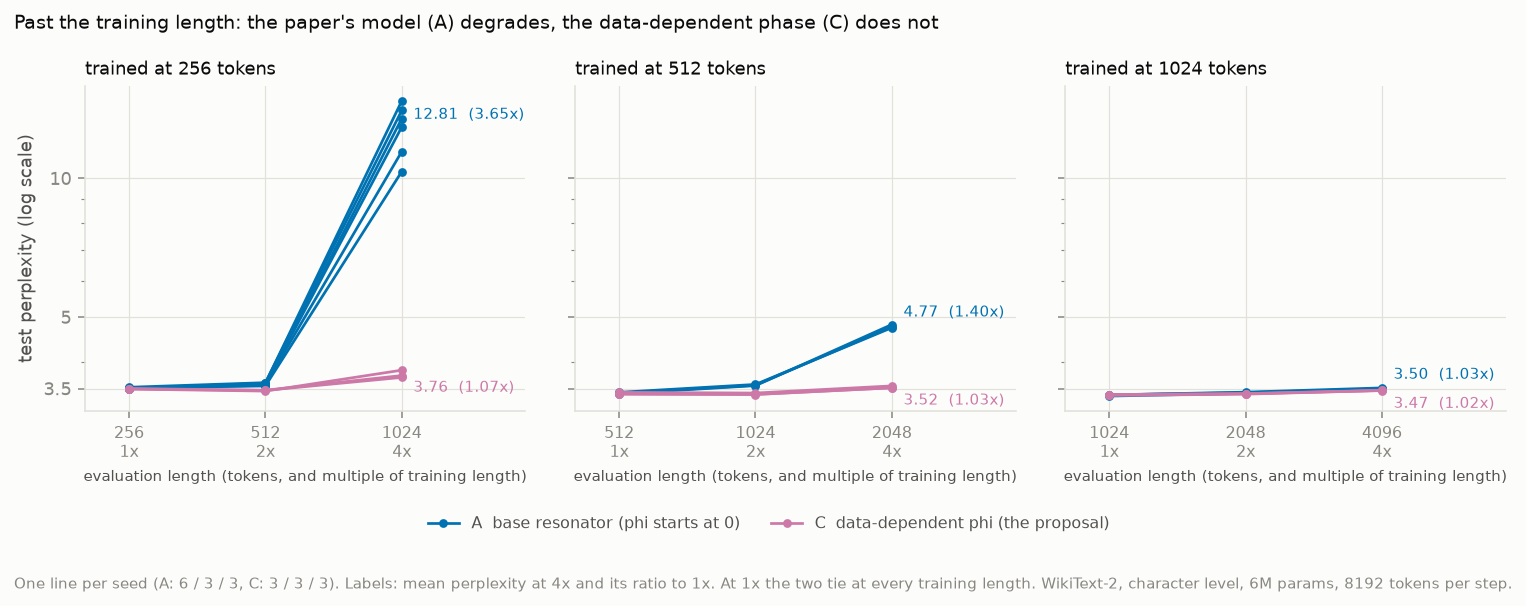}
  \caption{Perplexity at $1\times$, $2\times$ and $4\times$ the training length,
  one line per seed. A degrades beyond its training length, most when trained
  short; C does not.}
  \label{fig:length}
\end{figure}

\paragraph{Controls.} A fixed phase with a different initialisation does not
reliably reproduce the effect (Table~\ref{tab:ratio}, $T=256$). B collapses in
one of three seeds (18.79) and avoids it in two (4.22, 3.54). E keeps its two
longest heads near $\pi/2$ throughout training ($\lvert\cos\varphi\rvert\approx0.2$)
yet collapses in all three. Across all runs trained at 256, 10 of 12 fixed-phase
runs collapse at 1024, against 0 of 3 for C. Three seeds cannot show that input
dependence is necessary, but it was the only variant that never collapsed.

\subsection{Why the Original Model Collapses}
\label{sec:mechanism}

We evaluated the 15 models trained at 256 on the same 1024-token sequences,
measuring internal signals by position, and then modified the models at test
time only.

\paragraph{Where and what.} For A, perplexity is 3.40--3.45 on positions 0--255
and 3.52--3.64 on 256--511, then 18.4--41.7 on 512--767 and 48.8--76.0 on
768--1023. Among about 150 per-layer and per-head measurements, those that best
separate collapsing from healthy runs all concern long-memory heads (learned
half-life above about 200) in the upper layers. For example, the RMS state of
head 6 in layer 4 grows by at least $3.7\times$ from the first to the last
quarter of the sequence in every collapsing run, and by at most $2.0\times$ in
every healthy run (means: A $5.5\times$, C $1.7\times$). The upper layers thus
receive long-memory signals several times larger than any seen in training.

\paragraph{Causal test.} Table~\ref{tab:mechanism} confirms the cause.
Truncating only the kernels of heads with half-life above 256 to 256 lags
removes the collapse (A: $12.70\to3.43$; E: $7.57\to3.41$), and cannot change
anything at 256. Removing the constant component of the drive, whose
accumulation in a slow head rotates with absolute position, repairs most of A's
collapse but none of E's, so it is not the general cause.

\begin{table}[t]
  \centering
  \caption{Test-time interventions on models trained at 256: perplexity at 256 /
  at 1024 (mean over seeds 0--2). Truncation cuts a head's kernel to 256 lags.}
  \label{tab:mechanism}
  \setlength{\tabcolsep}{6pt}
  \begin{tabular}{lccc}
    \toprule
    Intervention & A & E & C \\
    \midrule
    None                                   & 3.48 / 12.70 & 3.49 / 7.57 & 3.49 / 3.76 \\
    Truncate heads with half-life $>256$   & 3.48 / \textbf{3.43} & 3.49 / \textbf{3.41} & 3.49 / 3.44 \\
    Truncate all heads                     & 3.48 / 3.40 & 3.49 / 3.41 & 3.49 / 3.41 \\
    Remove constant drive component        & 3.65 / 4.69 & 3.58 / 8.00 & 3.61 / 3.75 \\
    \bottomrule
  \end{tabular}
\end{table}

\paragraph{Consequence.} The collapse has two remedies. Truncation is free and
on WikiText-2 slightly better than C at 1024 (3.43 vs.\ 3.76), but it discards
all history older than the training length. That costs nothing here, since
Sect.~\ref{sec:bank} shows that no linearly recoverable information survives
that long. The input-dependent phase avoids the collapse with the full memory
retained, because its long-memory states drift far less. Whether the retained
memory helps requires a task that depends on older information, which we leave
to future work.

\subsection{What the Learned Phase Does}
\label{sec:whatphase}

Figure~\ref{fig:phi} analyses the learned offset
$\delta_{t,h}=\varphi_{t,h}-\varphi_h$ of C on test data.

\begin{itemize}
  \item \emph{It varies.} Its standard deviation is 0.78--0.98 rad in every
        layer, and $\tanh$ saturates on fewer than 3\% of tokens.
  \item \emph{It tracks content, not distance.} In layers 1--5 the current
        character explains 31--58\% of its variance, position in the word
        17--33\%, and distance to the previous occurrence of the same character
        4--8\%. The implied lookback $(\varphi_{t,h}\bmod2\pi)/\omega_h$ has
        rank correlation between $-0.05$ and $0.07$ with that distance.
  \item \emph{It is used.} Fixing it at $\varphi_h$ raises perplexity from 3.49
        to 5.74--8.54, and shuffling it across tokens to 7.16--11.15.
  \item \emph{It does not retrieve more.} A linear probe decoding the character
        $d$ steps back from the final hidden state scores 77.1/66.7/45.0/18.6\%
        for C against 78.7/70.0/50.4/20.2\% for A at $d=1,2,4,8$.
\end{itemize}

\begin{figure}[t]
  \centering
  \includegraphics[width=\textwidth]{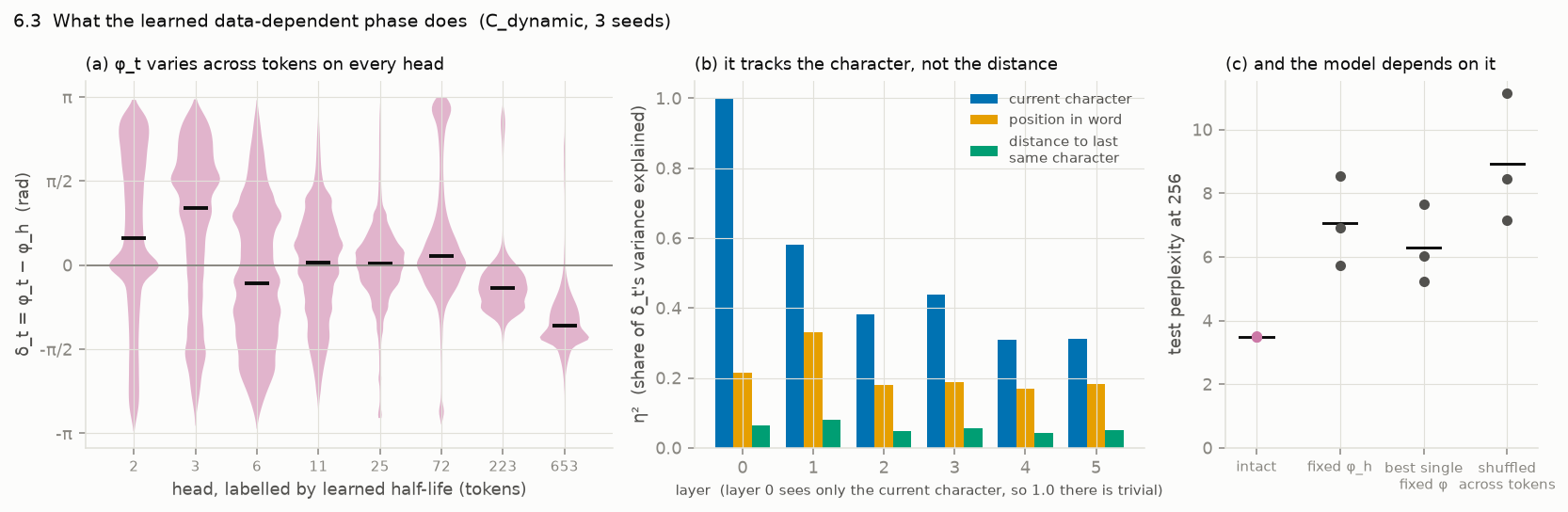}
  \caption{The learned phase of C (3 seeds). (a) Offset distribution per head.
  (b) Variance explained by the current character, position in word, and
  distance to the last same character. (c) Perplexity with the phase intact,
  fixed, set to the best constant, or shuffled.}
  \label{fig:phi}
\end{figure}

The phase therefore does not act as a pointer back in time. By
Eq.~\eqref{eq:dynphase} it chooses, per token, which mix of $\Re s$ and $\Im s$
to read, and so reaches over a sequence the half of the state that
Proposition~\ref{prop:capacity} shows a fixed phase leaves unread. Causality is
preserved (maximum prefix change under a suffix perturbation
$2.6\times10^{-14}$ in float64), and measured throughput is $0.98\times$ that of
A.

\section{Related Work}

\paragraph{Oscillatory and structured state spaces.} LinOSS~\cite{rusch2025linoss}
studies oscillatory state-space models and shows that undamped oscillators can
be stable and expressive. Section~\ref{sec:bank} shows that, under a linear
readout of a small state, removing damping destroys rather than extends what can
be read. HiPPO and S4~\cite{gu2020hippo,gu2022s4} give principled alternatives
to log-spaced initialisation.

\paragraph{Input-dependent dynamics.} Mamba~\cite{gu2023mamba},
Mamba-3~\cite{mamba3}, Kimi Linear~\cite{kimi2025linear} and Gated
DeltaNet-2~\cite{gateddeltanet2} make the transition input-dependent and give up
the fixed convolution. We place input dependence only in the read, which keeps
the convolution but cannot create memory the write side does not keep.

\paragraph{Recall and length extrapolation.} Recall in efficient models is
bounded by state size~\cite{arora2023zoology,arora2024based}. Length
extrapolation in transformers depends on position
encoding~\cite{press2022alibi,su2021roformer}; our RoPE and learned-position
baselines show the same sensitivity.

\section{Limitations and Conclusion}

All results use 6M-parameter character-level models on WikiText-2 with three
seeds for the proposal. We did not test the original paper's transfer datasets
or speed benchmarks, and on an associative-recall task only one of three seeds
of the proposal learned the task, so we draw no retrieval conclusion. A free
test-time truncation also fixes the original's collapse on this benchmark; the
proposal's advantage over it, retaining older history, is untested.

Making the readout phase of ResonatorLM input-dependent keeps the convolutional
training path, adds 0.2\% parameters, matches the original at every training
length, and removes its failure beyond the training length, which we trace to
long-memory heads accumulating more history than training exposed them to. The
benefit does not come from reaching further back: an ordinary token is lost in
the oscillator state within a few steps whatever the damping, and the learned
phase instead chooses, per character, which part of the state to read.

\bibliographystyle{splncs04}
\bibliography{refs}

\end{document}